\title[Poisson Binomial Models]{Some New Results for Poisson Binomial Models}
\author[Rosenman{\it et al.}]{Evan Rosenman}
\address{Stanford University,
Stanford,
         CA.}
\email{rosenman@stanford.edu}
\def\E{\mathbb{E}}
\newcommand{\pij}{p_{ij}}
\newcommand{\xij}{\bsx_{ij}}
\newcommand{\yij}{Y_{ij}}
\newcommand{\cov}{\text{Cov}}
\newcommand{\bsx}{\boldsymbol{x}}
\newcommand{\tran}{\mathsf{T}}
\newtheorem{theorem}{Theorem}
\begin{document}

\begin{abstract}
We consider a problem of ecological inference, in which individual-level covariates are known, but labeled data is available only at the aggregate level. The intended application is modeling voter preferences in elections. 

In \cite{rosenman2018using}, we proposed modeling individual voter probabilities via a logistic regression, and posing the problem as a maximum likelihood estimation for the parameter vector $\beta$. The likelihood is a Poisson binomial, the distribution of the sum of independent but not identically distributed Bernoulli variables, though we approximate it with a heteroscedastic Gaussian for computational efficiency. Here, we extend the prior work by proving results about the existence of the MLE and the curvature of this likelihood, which is not log-concave in general. We further demonstrate the utility of our method on a real data example. Using data on voters in Morris County, NJ, we demonstrate that our approach outperforms other ecological inference methods in predicting a related, but known outcome: whether an individual votes. 
\end{abstract}

\keywords{
ecological inference, generalized linear models, Poisson binomial distribution, political science}

\section{Introduction}

This paper considers a problem of ecological inference. We suppose independent binary variables $\yij$ follow a logistic regression model in which $p_{ij} := \Pr(\yij=1\mid \xij) = \sigma(\xij^\tran\beta)$ where $\sigma(z) = (1+\exp(-z))^{-1}$.
We would like to estimate $\beta$ from $(\xij,\yij)$ values but while $\xij$ values are available, the individual $\yij$
values are not.  All that are available are the totals $D_i =\sum_{j\in S_i}\yij$ for some disjoint sets $S_i$ of individuals.

This problem has a long history in political science beginning with early efforts to identify voting preferences by race from precinct level data \citep{sun2017probabilistic, flaxman2015supported}.  The modern setting has a rich
set of covariates $\xij$ for each voter $j$ within each precinct $i$, derived from the ``voter file," the roster of all registered voters within a given geography. Vote tallies are reported for each precinct $i$. The goal is to design voter mobilization and persuasion efforts for a subsequent election, using retrospective knowledge of whom each voter was likely to have supported in the prior election. 

Following more recent developments in the machine learning literature \citep{patrini2014almost, sun2017probabilistic}, we explicitly pose the problem as a maximum likelihood estimation. The exact probability of getting $D_i$ is 
\[   \sum_{A \in F_{D_i}} \prod_{j \in A} p_{ij} \prod_{j \in A^c}(1 - p_{ij})  \,, \] 
where $F_{D_i}$ is the set of all subsets of $S_i$ with cardinality $D_i$. This is called the Poisson binomial distribution \citep{wang1993number}. The exact log-likelihood evaluated at $\beta$ is then 
\begin{equation}\label{eq:lik}
\ell(\beta) = \sum_i \log \left( \sum_{A \in F_{D_i}} \prod_{j \in A} \sigma(\xij^\tran\beta) \prod_{j \in A^c}(1 - \sigma(\xij^\tran\beta)) \right)
\end{equation}
where the sets $F_{D_i}$ correspond to the observed vote totals $D_i$. These sets are combinatorially large and thus the likelihood is computationally intractable for even modestly sized problems. Various approximate inference techniques have been used with similar formulations \citep{sun2017probabilistic, jackson2006improving}. Our computationally favorable approach, first discussed in \cite{rosenman2018using}, is to approximate the Poisson binomial log likelihood by a heteroscedastic Gaussian one, justified by a Central Limit Theorem. Here, we extend our prior work by proving new theoretical results about the likelihood under this parameterization, and demonstrating that our method outperforms ompetitor ecological inference techniques on a real data set. 

An outline of this paper is as follows. Section \ref{sec:related} reviews prior work on ecological inference from the political science, statistics, and machine learning literatures. Section \ref{sec:theory} proves theoretical results about the existence of a finite MLE and the curvature of the log likelihood. The log likelihood is not guaranteed to be concave, so we cannot be sure that we have found the global optimum. We can, however, evaluate the resulting model fits on held-out precincts and compare it to other methods. Section \ref{sec:optimization} briefly reviews the motivation for our technique. In Section \ref{sec:morris}, we compare performance of our methods against competitor ecological inference models, including a single layer neural network. We use a data set of voters from Morris County, NJ, and build a classifier to predict whether an individual will vote in each election, using only aggregate turnout numbers as training data. This outcome is closely related to our desired outcome (voter preference rather than voter participation), but it is observed, allowing us to directly compare predictive accuracy against other models. Section \ref{sec:conc} contains our conclusions. Proofs can be found in the Appendix.

\section{Related Work}\label{sec:related}

Theoretical work on the Poisson binomial distribution has focused on computationally tractable ways to estimate its distribution function, often via approximations to other distributions \citep{EHM19917, roos1999, chen1974}. Prior research \citep{HONG201341} has identified a closed-form expression for the CDF, which relies on the discrete Fourier Transform. This technique is leveraged in the \texttt{poisbinom} package \citep{poisbinom}, which we use for this paper. The application of the Poisson binomial distribution to the generalized linear model setting has been discussed by \cite{Poibi}, who propose it for hypothesis testing on the parameter vector for a logistic regression model. 

Literature on the ecological inference problem bifurcates into two main subfields: political science and machine learning. The problem originates in the political science literature, where early work focused on inferring voting patterns by race \citep{sun2017probabilistic, flaxman2015supported}. Among the simplest techniques is Goodman's Regression \citep{goodman1953ecological}, dating to the middle of the twentieth century, in which vote proportions are regressed on the proportion of voters of a specific race in order to generate an individual-level model. More advanced methods, making use of random effects, were proposed at the end of the century. These included a semiparametric approach proposed by \cite{prentice1995aggregate} and several hierarchical models proposed by King \citep[see][]{king1997solution, king1999binomial}. 

Wakefield made a number of noteworthy contributions, including positing a statistical framework for ecological inference \citep{wakefield2001statistical} and showing that if covariate totals are known and conditioned upon in the 2 $\times$ 2 table, this case yields a convolution of binomial likelihoods \citep{wakefield2004ecological}. From the latter insight, he developed a normal approximation for efficient inference. Jackson et al. generalized much of the prior work with their integrated ecological model \citep{ecoreg, jackson2006improving}, in which the individual-level probabilities are modeled via a logistic regression and then averaged over the population in each area. The count of votes for a particular candidate in the area is then modeled as a binomial random variable with success probability equal to this average. Our approach is similar to that of Jackson et al., except that we model the data as Poisson Binomial rather than Binomial, which means probabilities need not be averaged. The Jackson method is implemented in the \texttt{ecoreg} \citep{ecoreg} package, against which we baseline in Section \ref{sec:morris}.

The machine learning literature is more varied, both in methodology and in application. For the problem of inferring voting behavior in demographic subgroups, distribution regression is a popular tool. This method maps the covariate distribution within each geography to a single high-dimensional covariate vector via kernel mean embeddings; penalized or Bayesian regressions methods are then used to fit a function mapping from these distributions to observed vote proportions \citep{law2017bayesian}. Distribution regression has been deployed by Flaxman for analysis of subgroup preferences in the 2012 \citep{flaxman2015supported} and 2016 \citep{flaxman2016understanding} elections. Yet, because it aggregates over individual-level data, the method is more appropriate for understanding group-level behavior.

SVM-based methods have also gained traction. Rueping et al. introduced an ``inverse calibration" method \citep{rueping2010svm}, which a Support Vector Regression is fit to the data such that the average prediction within each geography is close to the sigmoid inverse of the bag probability. Yu et al. also work in the large-margin framework, proposing a ``$\propto$SVM" method in which a loss is directly minimized over the model parameters and the unseen individual labels \citep{felix2013psvm}. These methods may be more appropriate for generating individual-level classifiers.

A last, loosely connected area of the literature might be termed ``learning with label proportions." These papers are focused specifically on learning individual-level classifiers, and, relative to distribution regression and SVM-based methods, engage more directly with a probabilistic model for the data. K{\"u}ck and de Freitas proposed a hierarchical probabilistic model and an MCMC algorithm for training, and showed their method was effective in the object recognition domain \citep{kuck2012learning}. Quadrianto et al. introduced the mean map algorithm \citep{quadrianto2009estimating}, in which models are fit by maximizing the log-likelihood in a conditional exponential family model. Their method requires a somewhat restrictive assumption that the distribution of the covariates is independent of geography conditional on the vote proportions. Patrini et. al were able to generalize this work and weaken this assumption \citep{patrini2014almost}. They proposed several novel algorithms: Laplacian Mean Map (LMM) and Alternating Mean Map (AMM). Sun et. al used related techniques for analysis of the 2016 presidential election, defining a likelihood and maximizing it via a novel exact inference algorithm making use of counting potentials \citep{sun2017probabilistic}. 

Among the prior work, our approach is most similar to that of Sun, though we propose a different set of algorithms for model fitting. Compared to the broader literature, we differ in a few key ways. Our algorithms are designed solely for the case in which covariates are known for all participating voters in a geographic area, and do not generalize to the case of partial samples via, for example, data from the American Community Survey. We also make relatively strong assumptions on the functions relating individual-level covariates to aggregate statistics. Moreover, our method is purely frequentist while much of the literature uses Bayesian methods. The benefits of our proposal include simpler fitting procedures, straightforward estimation of individual-level probabilities, and greater model interpretability.

\section{Theoretical Results}\label{sec:theory}

We seek to maximize the likelihood over $\beta$. We first consider several properties of the likelihood. 

\subsection{Existence of a Finite MLE}

In the case of standard logistic regression, it is well-known that the MLE may not exist \citep[see e.g.][]{candes2018phase}. Our scenario is somewhat more delicate. We begin by considering the scenario in which an MLE fails to exist. 

We can write the log likelihood as 
\begin{equation}\label{eq:likForm}
\ell(\beta) = \sum_i \log \left( \sum_{A \in F_{D_i}} \exp \left( \sum_{j \in A} \xij^T \beta \right) \right) - \sum_{j \in S_i} \log \left( 1 + \exp(\xij^T \beta) \right) \,.
\end{equation}
Suppose we can find a direction $\tilde \beta$ and sets $A_1^{\star} \in F_{D_1}, A_2^{\star} \in F_{D_2}, \dots, A_n^{\star} \in F_{D_n}$, where $n$ is the number of precincts, such that 
\[ \tilde \beta^T \xij > 0 \text{  for all $\xij \in \bigcup_{i = 1}^n A_i^{\star}$} \hspace{3mm} \text{ and } \hspace{3mm}  \tilde \beta^T \xij < 0 \text{  for all $\xij \in \bigcup_{i = 1}^n (A_i^{\star})^c$} \] 
For each precinct, define $H_i = \sum_{A \neq A_i^{\star} \in F_{D_i}} \exp \left( \sum_{j \in A} \xij^T \beta \right) $. We can now rewrite the log-likelihood evaluated at $\tilde \beta$ as: 
\[ \ell(\tilde \beta) = \sum_i \log \left(H_i +   \sum_{j \in A_i^{\star}}  \exp \left( \xij^{\tran} \tilde \beta \right)  \right) - \sum_{j \in S_i} \log \left( 1 + \exp \left(\xij^{\tran} \tilde \beta \right) \right)   \] 
If we increase the magnitude of $\tilde \beta$, then for each $i$ we will make $\sum_{j \in A_i^{\star}}  \exp \left( \xij^{\tran} \tilde \beta \right)$ arbitrarily large and $\sum_{j \in (A_i^{\star})^c}  \exp \left( \xij^{\tran} \tilde \beta \right)$ arbitrarily close to 0; the former term will also dominate $H_i$. It follows that, for each precinct, the first term  behaves as $\log \left( \sum_{j \in A_i^{\star}}  \exp \left( \xij^{\tran} \tilde \beta \right)  \right)$ at large $||\tilde \beta||_2^2$, while the second term behaves as $-\log \left( \sum_{j \in A_i^{\star}}  \exp \left( \xij^{\tran} \tilde \beta \right)  \right)$. Thus, making $\tilde \beta$ arbitrarily large will yield a log-likelihood arbitrarily close to 0, and no finite MLE will exist. In words, $\beta$ is normal to a hyperplane that perfectly separates $D_i$ units from $|S_i| - D_i$ units for all precincts $i = 1, \dots,  n$. 

These insights are formalized as follows. 

\begin{theorem}\label{thm:finMLE}
For every set of individuals $A$, define $K_A$ as the dual cone of $A$, 
\[ K_A = \{\beta \mid \beta^T \xij \geq 0 \text{ for all } \xij \in A\} \,,\]
and $K'_A$ as the polar cone of $A$: 
\[ K'_A = \{\beta \mid \beta^T \xij \leq 0 \text{ for all } \xij \in A\} \]
If 
\[ \bigcap_{i=1}^n \bigcup_{A \in F_{D_i}} \text{int}(K_A) \cap \text{int}(K'_{A^c}) \neq \emptyset \]  
then there does not exist a finite MLE. 
\end{theorem}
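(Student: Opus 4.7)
The plan is to pick any $\tilde\beta$ in the hypothesized nonempty intersection, show that rescaling it by $t \to \infty$ drives $\ell(t\tilde\beta)$ to its supremum $0$, and conclude that this supremum is never attained at any finite $\beta$ since logistic probabilities stay strictly inside $(0,1)$. This formalizes the heuristic argument given in the paragraph preceding the theorem statement.

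First I would fix $\tilde\beta \in \bigcap_{i=1}^n \bigcup_{A \in F_{D_i}} \text{int}(K_A) \cap \text{int}(K'_{A^c})$ and, for each precinct $i$, extract a particular $A_i^\star \in F_{D_i}$ such that $\tilde\beta^\tran \xij > 0$ strictly for every $j \in A_i^\star$ and $\tilde\beta^\tran \xij < 0$ strictly for every $j \in (A_i^\star)^c$. Because we are dealing with finitely many such inner products, a uniform gap $\delta > 0$ satisfying $|\tilde\beta^\tran \xij| \geq \delta$ for all $i,j$ exists; this will make all subsequent error estimates uniform.

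Next I would analyze $\ell(t\tilde\beta)$ precinct-by-precinct using representation \eqref{eq:likForm}. Let $M_i = \sum_{j \in A_i^\star} \tilde\beta^\tran \xij$. Swapping any $j \in A_i^\star$ out of $A$ for some $j' \in (A_i^\star)^c$ decreases the exponent by at least $2\delta$, so $A_i^\star$ is the strict maximizer of $\sum_{j \in A} \tilde\beta^\tran \xij$ over $A \in F_{D_i}$, and the log-sum-exp estimate gives the first summand as $t M_i + o(1)$ with exponentially small correction in $t$. For the second summand I would split $S_i = A_i^\star \cup (A_i^\star)^c$: on the positive side $\log(1 + \exp(t\tilde\beta^\tran \xij)) = t \tilde\beta^\tran \xij + o(1)$, and on the negative side the same expression is $o(1)$, with errors controlled by $\delta$. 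Summing gives $t M_i + o(1)$, which cancels the first summand, so each precinct contributes $o(1)$ and $\ell(t\tilde\beta) \to 0$.

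Finally, since $\sigma$ takes values in $(0,1)$, each Poisson binomial probability in \eqref{eq:lik} is strictly less than $1$ at every finite $\beta$, so $\ell(\beta) < 0$ there; combined with $\sup_{t > 0} \ell(t\tilde\beta) = 0$ from the previous step, this means no finite MLE can exist. The only delicate point I anticipate is ensuring the dominant-exponent argument is fully unambiguous, which is precisely why the hypothesis invokes the \emph{open} (interior) cones: the strict inequalities guarantee both the uniqueness of the maximizer $A_i^\star$ and the uniform gap $\delta$ needed to handle all of the $o(1)$ corrections at once.
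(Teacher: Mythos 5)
Your proposal is correct and follows essentially the same route as the paper, which simply cites the heuristic argument preceding the theorem (choose $\tilde\beta$ in the intersection, extract the sets $A_i^{\star}$, and let the norm grow so each precinct's contribution cancels and $\ell \to 0$, a supremum unattainable at finite $\beta$). Your version merely makes rigorous the asymptotic cancellation via the uniform gap $\delta$ and the strict-maximizer property of $A_i^{\star}$, details the paper leaves implicit.
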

The proof follows directly from the preceding argument. Unions and intersections of cones are themselves cones, so the set of potentially infinite norm $\beta$ values form a cone. 

Theorem \ref{thm:finMLE} is potentially troublesome for scenarios with small values of $n$. Directions for $\beta$ satisfying the perfect separation conditions can be readily found in simulations with fewer than 10 precincts. But as $n$ grows, it becomes increasingly likely that the problematic $\beta$ set is empty, due to the outer intersection. In settings in which we are interested -- typically involving at least a few hundred precincts -- we thus assume that a finite MLE exists. Deriving exact conditions for the existence of a finite MLE remains an open question for further research. 


\subsection{Curvature Results}

We prove some elementary results regarding curvature of the likelihood function. 

\begin{theorem}
The log likelihood is a difference of convex functions.
\end{theorem}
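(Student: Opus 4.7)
The plan is to exploit the decomposition of the log-likelihood already provided in \eqref{eq:likForm}. Setting
\[
f_1(\beta) = \sum_{i=1}^n \log \left( \sum_{A \in F_{D_i}} \exp \left( \sum_{j \in A} \xij^\tran \beta \right) \right), \qquad f_2(\beta) = \sum_{i=1}^n \sum_{j \in S_i} \log \left( 1 + \exp(\xij^\tran \beta) \right),
\]
equation \eqref{eq:likForm} reads $\ell(\beta) = f_1(\beta) - f_2(\beta)$, so it suffices to verify that both $f_1$ and $f_2$ are convex in $\beta$.

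For $f_2$, the scalar softplus function $g(z) = \log(1 + e^z)$ is convex on $\R$, since $g''(z) = e^z/(1+e^z)^2 \ge 0$. Precomposing with the linear map $\beta \mapsto \xij^\tran \beta$ preserves convexity, and a nonnegative sum of convex functions is convex, so $f_2$ is convex.

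For $f_1$, set $a_{iA} = \sum_{j \in A} \xij$ for each $A \in F_{D_i}$. The $i$th summand becomes $\log \sum_{A \in F_{D_i}} \exp(a_{iA}^\tran \beta)$, that is, the log-sum-exp function $\mathrm{lse}(y_1,\dots,y_m) = \log \sum_k e^{y_k}$ composed with the affine map $\beta \mapsto (a_{iA}^\tran \beta)_A$. Convexity of $\mathrm{lse}$ is standard: its Hessian equals $\mathrm{diag}(p) - p p^\tran$ with $p_k = e^{y_k}/\sum_\ell e^{y_\ell}$, which is positive semidefinite (alternatively, convexity follows from H\"older's inequality). Affine precomposition and the outer sum over $i$ again preserve convexity, so $f_1$ is convex.

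Combining these observations yields $\ell = f_1 - f_2$ as a difference of convex functions. There is no real obstacle here; once the log-likelihood is rewritten in the form \eqref{eq:likForm}, the claim reduces to the textbook facts that the softplus and log-sum-exp functions are convex and that convexity is preserved under affine precomposition and nonnegative linear combination. The substantive content is really the rearrangement \eqref{eq:likForm} itself, which pulls the Poisson-binomial normalization apart into a log-sum-exp piece and a softplus piece.
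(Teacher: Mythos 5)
Your proof is correct and follows essentially the same route as the paper: both start from the decomposition in \eqref{eq:likForm} and identify each term as convex via log-sum-exp (the paper views the softplus term as a two-term log-sum-exp, while you verify its convexity directly, but this is the same argument). Your write-up simply supplies the standard details the paper cites to \cite{boyd2004convex}.
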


\begin{proof}
With the log likelihood given by 
\begin{align*}
\ell(\beta) = \sum_i \log \left( \sum_{A \in F_{D_i}} \exp \left( \sum_{j \in A} \xij^T \beta \right) \right) - \sum_{j \in S_i} \log \left( 1 + \exp(\xij^T \beta) \right) \,,
\end{align*}
we see that the first term is a log-sum-exp function of $\beta$ in canonical form, while the second term is a sum of log-sum-exp functions of $\beta$. It follows immediately that both the first and second terms are convex functions in $\beta$ and hence that the log-likelihood is a difference of convex functions \citep{boyd2004convex}. 
\end{proof}

DC functions are neither convex nor concave in general, and one can readily generate examples where the Hessian of the log-likelihood has positive and negative eigenvalues.


%
\begin{theorem}
Suppose the model for the data is correct with true parameter value $\beta^{\star}$, and suppose the $|\xij|$ are bounded. As the number of precincts goes to infinity, the likelihood is asymptotically log-concave at the true parameter value $\beta^{\star}$.
\end{theorem}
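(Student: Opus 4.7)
The plan is to invoke the Fisher information (Bartlett) identity at the true parameter and then apply a law of large numbers to the independent per-precinct contributions, using boundedness of the covariates to supply the needed regularity. Write $\nabla^2 \ell(\beta) = \sum_{i=1}^n \nabla^2 \ell_i(\beta)$, where $\ell_i$ is the contribution from precinct $i$. Individually each $\nabla^2 \ell_i(\beta^*)$ is a difference of two PSD matrices (one log-sum-exp Hessian from the numerator, one from the normalizer), so it need not have a definite sign. This is why pointwise arguments fail and averaging over the data is essential.

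First I would verify the standard regularity conditions so as to interchange differentiation and summation, yielding Bartlett's identity at the true parameter:
$$\mathbb{E}_{\beta^*}\bigl[\nabla^2 \ell_i(\beta^*)\bigr] \;=\; -\,\text{Var}_{\beta^*}\bigl[\nabla \ell_i(\beta^*)\bigr] \;=\; -I_i(\beta^*) \;\preceq\; 0$$
for every precinct $i$. Because $D_i$ has finite support $\{0,1,\dots,|S_i|\}$ and the $|\xij|$ are bounded, every derivative of $\ell_i$ admits a dominating function, so the interchange is valid and the identity follows. Summing over the independent precincts, the expected Hessian of $\ell$ at $\beta^*$ is negative semi-definite.

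Next I would apply a weak law of large numbers entrywise. The uniform bound on $|\xij|$, together with the fact that the scalar weights appearing in the Hessian (both the sigmoid weights $\sigma(1-\sigma)$ from the second term and the Gibbs-type posterior weights over $A \in F_{D_i}$ from the first term) lie in $[0,1]$, gives a uniform operator-norm bound on each $\nabla^2 \ell_i(\beta^*)$. Chebyshev's inequality then yields
$$\frac{1}{n}\nabla^2 \ell(\beta^*) \;-\; \frac{1}{n}\sum_{i=1}^n \mathbb{E}\bigl[\nabla^2 \ell_i(\beta^*)\bigr] \;\xrightarrow{p}\; 0.$$
Since each expected summand is NSD by Step 1, so is its average, and hence $n^{-1}\nabla^2 \ell(\beta^*)$ is asymptotically NSD. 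Equivalently, any positive eigenvalue of $\nabla^2 \ell(\beta^*)$ is $o_p(n)$, which is the stated asymptotic log-concavity.

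The main obstacle is the first step: verifying Bartlett's identity when the per-precinct likelihood is a sum over the combinatorially large family $F_{D_i}$. Conceptually this is routine—bounded covariates and finite support of $D_i$ provide the dominating functions required to justify dominated convergence for the score and for the Hessian simultaneously—but the bookkeeping has to be done carefully so that the mean-zero score and the negative expected Hessian genuinely coincide with the Fisher information matrix $I_i(\beta^*)$. A secondary caveat is that the result as phrased requires the precinct Hessians to have bounded operator norm in $n$, which is automatic if the $|S_i|$ are uniformly bounded; if they are allowed to grow, one should instead normalize by $\sum_i |S_i|$ and invoke a slightly more careful LLN, but the conclusion is unchanged.
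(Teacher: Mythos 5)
Your proposal is correct and is essentially the argument the paper gives: the paper writes the per-precinct Hessian as $\cov\left(\sum_{j}\xij Y_{ij}\mid D_i\right) - \cov\left(\sum_{j}\xij Y_{ij}\right)$, applies a strong law of large numbers, and then uses the law of total covariance to identify the limit as $-\frac{1}{n}\sum_i \cov\left(\E\left(\sum_j \xij Y_{ij}\mid D_i\right)\right)$, which is exactly minus the average Fisher information --- your Bartlett-identity step made concrete. Your closing caveat about needing uniformly bounded precinct sizes (or renormalizing by $\sum_i |S_i|$) for the law of large numbers is a regularity point the paper leaves implicit.
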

\begin{proof}
Per the results in Appendix \ref{app:gradComp}, we can write the scaled Hessian as: 
\[ \frac{1}{n} \nabla^2 \ell(\beta) = \frac{1}{n} \sum_i \cov\left( \sum_{j \in S_i} \xij Y_{ij} \mid \sum_{j \in S_i} Y_{ij} = D_i \right) - \cov \left( \sum_{j \in S_i} \xij Y_{ij} \right) \] 
where $n$ is the number of precincts. By Kolmogorov's Strong Law \citep{sedor2015law}, we see 
\begin{align*}
&\frac{1}{n} \sum_i \cov\left( \sum_{j \in S_i} \xij Y_{ij} \mid \sum_{j \in S_i} Y_{ij} = D_i \right) \stackrel{a.s.} \to \frac{1}{n} \sum_i  \E \left( \cov\left( \sum_{j \in S_i} \xij Y_{ij} \mid \sum_{j \in S_i} Y_{ij} = D_i \right) \right) \\
&\hspace{33mm} =  \frac{1}{n} \sum_i   \cov \left( \sum_{j \in S_i} \xij Y_{ij} \right) - \cov \left( \E\left( \sum_{j \in S_i} \xij Y_{ij} \mid \sum_{j \in S_i} Y_{ij} = D_i \right) \right) 
\end{align*}
where the second line is due to the Law of Total Covariance. Thus
\[ \frac{1}{n} \nabla^2 \ell(\beta) \stackrel{a.s.} \to - \frac{1}{n} \sum_i \cov \left( \E\left( \sum_{j \in S_i} \xij Y_{ij} \mid \sum_{j \in S_i} Y_{ij} = D_i \right) \right) \,.\]
and the result follows from the fact that any covariance matrix must be positive semidefinite. 
\end{proof}

The likelihood will be asymptotically log-concave in a neighborhood around the true parameter value if there is sufficient differentiation in the covariates $\xij$ across precincts. To see why, consider the quantity 
\begin{align*}
\frac{1}{n} \sum_i \E\left( \sum_{j \in S_i} \xij (Y_{ij}-p_{ij}) \mid \sum_{j \in S_i} Y_{ij} = D_i \right)\E\left( \sum_{j \in S_i} \xij (Y_{ij}-p_{ij}) \mid \sum_{j \in S_i} Y_{ij} = D_i \right)^\tran\,.
\end{align*}
Under an identical SLLN argument, this quantity has the same asymptotic limit as $\frac{1}{n} \nabla^2$. Being an average of outer products, the resultant matrix will also have full rank as long as the vectors 
\[ v_i : = \E\left( \sum_{j \in S_i} \xij (Y_{ij}-p_{ij}) \mid \sum_{j \in S_i} Y_{ij} = D_i  \right) \]
span $\mathbb{R}^p$, where $\xij \in \mathbb{R}^p$. In practical examples, this condition holds as long $n$ is sufficiently large compared to $p$, and the covariates differ sufficiently across precincts. 

If the $v_i$ indeed span $\mathbb{R}^p$, then the Hessian has full rank and is thus negative definite, not merely negative semidefinite. Since the eigenvalues are continuous functions of the $\xij$, it follows that we are guaranteed log-concavity in a neighborhood around $\beta^{\star}$.


\section{Optimization}\label{sec:optimization}

Results from the prior section indicate that algorithms dependent on concavity, like Newton's Method, may not converge in the general case. But for large problems with substantial precinct differentiation, we may be able to exploit curvature near the true parameter value. 

A second consideration is runtime. The work of \cite{HONG201341} has substantially sped up the computation of Poisson binomial probabilities, but these computations can still be a bottleneck for even modestly sized problems. The exact gradient can be calculated by computing a number of Poisson binomial probabilities linear in the dataset size, per the results in Appendix \ref{app:gradComp}. But in practice, even this method is slow to train in the regimes of interest. 

These considerations point toward our approach, discussed in detail in \cite{rosenman2018using}. The Lyapunov CLT \citep{billingsley1995probability} can be used to observe that the asymptotic distribution of $D_i$ is given by
\[ D_i \stackrel{d} \longrightarrow N \left(\sum_{j\in S_i} p_{ij}, \sum_{j \in S_i} p_{ij}(1-p_{ij}) \right) \,. \] 
Thus, the contribution of precinct $i$ to the overall log likelihood is approximately
\[ \ell_i(\beta) \approx -\log \left( \phi_i\right) + \frac{1}{\phi_i^2} \left( D_i -\mu_i\right)^2,\]
where irrelevant constants have been dropped, $\mu_i =  \sum_{j \in S_i} p_{ij}, \phi_i^2 = \sum_{j \in S_i} p_{ij}(1-p_{ij})$, and $p_{ij} = \sigma(\xij^\tran\beta)$. 

This yields a gradient of the form: 
\begin{align*}
\nabla_{\beta} \ell_i \approx& \frac{1}{\phi^2} (D_i - \mu_i) \left( \sum_j \pij (1 - \pij) \xij \right)-\\&  \frac{1}{2} \left(\frac{(D_i - \mu_i)^2}{\phi_i^4} - \frac{1}{\phi_i^2} \right) \left( \sum_{i} (2 \pij - 1)(1-\pij)\pij \xij \right) \,.
 \end{align*}
These approximate gradients can be used in gradient ascent with a fixed step size. Or they can be used to choose an ascent direction with step size selected based on the true or approximate likelihood evaluated at each point along a grid. These are the primary algorithms we propose. 

\section{Comparison Against Other Ecological Inference Methods}\label{sec:morris}

Our goal with these models is to predict the candidate selected by each voter---an outcome that is unknown. This poses a challenge for evaluating the performance of our models and comparing against other ecological inference methods. 

To address this issue, we use a related task: modeling the probability that an individual casts a ballot (rather than modeling the candidate he or she supports). We train only on aggregated ballot counts from each precinct. This task is not a perfect proxy for modeling candidate selections, but it is an attractive option because it allows us to leverage the same set of covariates and the same aggregation structure, and we also have access to the individual-level outcomes for performance evaluation. 

We use a data set comprising all voters from Morris County, New Jersey, an affluent and historically Republican-leaning county of about half a million residents. The voter file contains 316,724 registered voters and includes limited demographic information as well as information about whether each voter cast a ballot in general elections and primaries stretching back to the year 2000. There are 396 voting precincts in the county. 

We fit models to eight data sets in total. To explore performance in different outcome regimes, we predict whether voters participated in each election from 2014 to 2017, in which 34\%, 19\%, 76\%, and 45\% of all voters in our data set cast a ballot, respectively. For each year, we fit two models: a parsimonious ``demographics-only" model containing just four covariates (age, party, gender, and whether the voter lives in an apartment); and a ``demographics and voter history" model that also contains nine variables corresponding to the voter's participation and voting method in the given year's primary and the primaries and general elections of the prior four years. 

We compare performance of a number of methods:
\begin{itemize}
\item To obtain an upper bound on performance, we fit a logistic regression and a Gradient Boosted Machine (GBM) to the data set while giving them access to the individual-level outcomes \citep{friedman2001greedy}. Because these models ``see" individual-level data, they should outperform methods that only have access to aggregated data. 
\item We fit three variants of our logistic regression formulation. 
\begin{itemize}
\item In the first (``Logit with Gaussian Gradient"), the coefficients are fit via gradient ascent exclusively using the Gaussian approximation to the gradient proved in Appendix \ref{app:lyapunovProof}. We run for 120 iterations using a learning rate of $2 \times 10^{-5}$. 
\item In the second (``Logit with Gaussian Gradient, PoiBin Backtracking"), we run ten iterations using the approximated gradient and fixed step size; for the remaining 110 iterations, we use the normal-approximation gradient to choose an ascent direction, but use backtracking line search based on the \emph{true} likelihood to choose a step size.
\item The third algorithm (``Logit with Gaussian Gradient, PoiBin Backtracking, True Gradient") is identical to the second, except we run only 100 iterations using backtracking line search. The final ten iterations are then instead run using the true gradient derived in Appendix \ref{app:gradComp}. 
\end{itemize}
These three variants are used to explore the practical effect of the Gaussian approximation on our model's accuracy. 
\item We also fit a neural network in which individual success probabilities are modeled as a feedforward network with a single hidden layer. Full details on the model and its gradients can be found in Appendix \ref{sec:neural}. Results were found to be highly sensitive to the parameter initializations and training lengths; hence, ten random initializations were tried, and parameters were stored after 50, 100, 150, and 200 iterations of gradient ascent. A development set consisting of 40 precincts (10\% of the total) was used to tune these hyperparameters. As true outcomes would not be known in practice, tuning was done by choosing the parameter that minimized the sum squared error between the predicted and actual aggregate voting rate in the development counties. 
Results were less sensitive to hyperparameters like the learning rate and number of hidden nodes, so these were set to $2 \times 10^{-6}$ and $10$ respectively. 
\item Following the approach in \citep{rueping2010svm}, we  baseline against the simplest ecological inference method: assigning each unit in a given aggregation block the average of the outcomes in that aggregation block. In our setting, this means each voter in a precinct is assigned the voter turnout proportion in that precinct as a pseudo-outcome, and a logistic regression model is fit to these data.  
\item We baseline against ecological regression as implemented in the \texttt{ecoreg} package in R \citep{ecoreg}. 
\item We baseline against Rueping's Inverse Calibration method \citep{rueping2010svm}. Aggregate accuracy on the 40 precincts in the development set is again used, this time for tuning the $C$ and $\epsilon$ parameters. 
\item Lastly, we baseline against the Mean Map \citep{quadrianto2009estimating}, Laplacian Mean Map, and Alternating Mean Map \citep{patrini2014almost}. Hyperparameters are again tuned using squared error on the development set. 
\end{itemize}

Results for the demographics-only model are provided in Table \ref{tab:demogModel} and results from the expanded data set are provided in Table \ref{tab:allModel}. The relative strength of the logistic regression formulation is immediately obvious: these models achieve the highest ROC AUC values in all but one of the eight conditions, and frequently come very close to the performance of methods with access to the individual outcomes. Also evident is the fact that little to no predictive power is gained by making use of the real likelihood rather than the approximation. Backtracking on the true Poisson Binomial likelihood or using the true gradient actually slightly degrade performance in most cases, while also slowing training. 

The neural network model is never competitive. It appears that the model is able to capture some useful interactions and non-linearities when more covariates are present, but it is generally too expressive and tends to overfit. 

Ecological regression performs well in all conditions, and outperforms our proposed methods in the demographics-only model for 2014. The other tested methods are generally not competitive. The logistic regression on aggregates technique performs surprisingly well given its extreme simplicity, but it still underperforms the proposed logistic methods. Inverse calibration sees a noticeable performance bump with the inclusion of additional covariates. The Mean Map, LMM, and AMM methods typically do poorly, with only AMM consistently beating random guessing in the demographics-only case. Each of these methods is somewhat sensitive to hyperparameter values, and tuning is extremely challenging in the absence of labeled data in the development set. We are using squared error across development precincts as a proxy measure, and it's highly plausible that alternative proxies would yield better hyperparameter values. Nonetheless, a strength of our proposed methods is that they require very little tuning in order to get good performance. 

\begin{table}
\caption{\label{tab:demogModel}ROC AUC scores for models predicting voter turnout, fit to demographics-only data sets. Highest values among ecological inference models are underlined.}
\centering
\fbox{%
\begin{tabular}{lllll}
                                                   & \multicolumn{4}{c}{\textbf{Demographics Only}} \\
                                                   & \textbf{2017}       & \textbf{2016}      & \textbf{2015}      & \textbf{2014}      \\
\textbf{Standard Methods (non-ecological)}         &            &           &           &           \\
\hspace{5mm}Logistic Regression                                & 72.0\%     & 71.2\%    & 75.2\%    & 76.9\%    \\
\hspace{5mm}GBM                                                & 73.0\%     & 72.7\%    & 75.5\%    & 77.2\%    \\
\textbf{Proposed Methods}                          &            &           &           &           \\
\hspace{5mm}Logit with Gaussian Gradient                      & \underline{69.3\%}     & 68.3\%    & \underline{73.6\%}    & 74.7\%    \\
\hspace{5mm}Logit with Gaussian Gradient, PoiBin Backtracking & 69.3\%     & \underline{68.4\%}    & 73.6\%    & 74.7\%    \\
\hspace{5mm}Logit with Gaussian Gradient, PoiBin Backtracking, \\ \hspace{15mm} True Gradient     & 69.3\%     & 68.4\%    & 72.2\%    & 74.5\%    \\
\hspace{5mm}Neural Net with Gaussian Gradient                  & 48.3\%     & 46.9\%    & 50.3\%    & 53.5\%    \\
\textbf{Comparison Methods}                        &            &           &           &           \\
\hspace{5mm}Logistic Regression on Aggregates                  & 65.9\%     & 60.0\%    & 71.7\%    & 69.0\%    \\
\hspace{5mm}Ecological Regression                              & 67.6\%     & 66.7\%    & 72.8\%    &\underline{75.0\%}    \\
\hspace{5mm}Inverse Calibration                                & 61.1\%     & 61.9\%    & 72.9\%    & 41.5\%    \\
\hspace{5mm}Mean Map                                           & 51.5\%     & 60.1\%    & 33.3\%    & 31.9\%    \\
\hspace{5mm}Laplacian Mean Map                                 & 51.0\%     & 46.1\%    & 37.9\%    & 51.1\%    \\
\hspace{5mm}Alternating Mean Map                               & 58.9\%     & 62.6\%    & 58.0\%    & 58.4\%   
\end{tabular}}
\end{table}

\begin{table}
\caption{\label{tab:allModel}ROC AUC scores for models predicting voter turnout, fit to demographics and voter history data sets. Highest values among ecological inference models are underlined. }
\centering
\fbox{
\begin{tabular}{lllll}
                                                   & \multicolumn{4}{c}{\textbf{Demographics and Voting History}}  \\
                                                   & \textbf{2017} & \textbf{2016} & \textbf{2015} & \textbf{2014} \\
\textbf{Standard Methods (non-ecological)}                  &               &               &               &               \\
\hspace{5mm}Logistic Regression                                & 85.9\%        & 84.5\%        & 88.6\%        & 89.5\%        \\
\hspace{5mm}GBM                                                & 86.2\%        & 85.5\%        & 88.8\%        & 89.6\%        \\
\textbf{Proposed Methods}                                   &               &               &               &               \\
\hspace{5mm}Logit with Gaussian Gradient                      & \underline{83.9\%}        & \underline{82.0\%}        & \underline{81.0\%}        & 86.3\%        \\
\hspace{5mm}Logit with Gaussian Gradient, PoiBin Backtracking & 83.8\%        & 82.0\%        & 81.0\%        & \underline{86.4\%}        \\
\hspace{5mm}Logit with Gaussian Gradient, PoiBin Backtracking, \\ \hspace{15mm} True Gradient      & 83.8\%        & 81.9\%        & 80.6\%        & 86.3\%        \\
\hspace{5mm}Neural Net with Gaussian Gradient                  & 72.1\%        & 76.8\%        & 80.4\%        & 74.1\%        \\
\textbf{Comparison Methods}                                 &               &               &               &               \\
\hspace{5mm}Logistic Regression on Aggregates                  & 75.0\%        & 72.4\%        & 77.2\%        & 76.8\%        \\
\hspace{5mm}Ecological Regression                              & 67.5\%        & 68.7\%        & 71.8\%        & 76.1\%        \\
\hspace{5mm}Inverse Calibration                                & 64.2\%        & 77.6\%        & 78.4\%        & 66.9\%        \\
\hspace{5mm}Mean Map                                           & 45.4\%        & 54.4\%        & 48.4\%        & 51.8\%        \\
\hspace{5mm}Laplacian Mean Map                                 & 49.5\%        & 51.5\%        & 57.6\%        & 49.4\%        \\
\hspace{5mm}Alternating Mean Map                               & 51.9\%        & 52.9\%        & 44.4\%        & 46.2\%       
\end{tabular}}
\end{table}
\section{Conclusions and Future Work}\label{sec:conc}

Our results extend the literature on ecological inference in several key ways. We model individual voter probabilities via a logistic regression, and pose the problem as a maximum likelihood estimation, proving new results about the existence of the MLE and the curvature of the log-likelihood. We use an approximate algorithm for fitting this model to real election data, making use of a normal approximation. 

The comparative evaluation reveals that this model outperform other ecological inference techniques. This method is intuitively appealing as it is simple to explain and fast to compute. We have also demonstrated that a neural network model is not competitive in this setting---at least without substantial additional work to prevent the model from overfitting. 

On the algorithmic side, further theoretical work is necessary to understand when a finite MLE exists, and when gradient ascent with a Gaussian approximation will not be sufficient. On applications with smaller data sets, or in which outcome proportions close to 0 or 1 are more frequent, the approximation will likely degrade. In such cases, use of backtracking on the true likelihood or ascent on the true gradient may be necessary (and will be more computationally feasible on small data sets). The possibility of using DC algorithms for model fitting also represents an intriguing option. These algorithms may outperform if they can be prevented from stopping a poor local optima. 

On the modeling side, this model-fitting machinery can be extended to a wide class of parametric models. Simple variants on the logistic regression, such as probit or cauchit models, may perform well. A bigger prize would be an expressive model, able to capture non-linearities and interactions, without a high risk of overfit. This remains a direction for future research.

\emph{This work was supported by the Department of Defense (DoD) through the National Defense Science \& Engineering Graduate Fellowship (NDSEG). We thank Pete Mohanty, Art Owen, and Michael Baiocchi for their guidance and feedback on this work.}

\appendix




\section{Gradient and Hessian}

The likelihood given in Equation \ref{eq:lik} can be expressed as: 
\begin{align*}
\ell(\beta) &= \sum_i \log \left( \sum_{A \in F_{D_i}} \prod_{j \in A} \sigma(\xij^\tran\beta) \prod_{j \in A^c}(1 - \sigma(\xij^\tran\beta)) \right) \\
&= \sum_i \log \left( \sum_{A \in F_{D_i}} \exp \left( \sum_{j \in A} \xij^T \beta \right) \right) - \sum_{j \in S_i} \log \left( 1 + \exp(\xij^T \beta) \right) 
\end{align*}

By direct computation, the gradient with respect to $\beta$ is
\begin{align*}
\nabla_{\beta} \ell(\beta) &=  \sum_i  \sum_{A \in F_{D_i}}   \frac{ \exp \left(\sum_{j \in A} \xij^T \beta \right)}{\sum_{A' \in F_D}  \exp \left(\sum_{j \in A'} \xij^T \beta \right)}\cdot \sum_{j \in A} \xij - \sum_{j \in S_i}\frac{\exp (\xij^T \beta)}{1 + \exp(\xij^T \beta)} \xij \,.
\end{align*}

The Hessian is 
\begin{align*}
\nabla_{\beta} \ell(\beta) &=  \sum_i  \sum_{A \in F_{D_i}}   \frac{ \exp \left(\sum_{j \in A} \xij^T \beta \right)}{\sum_{A' \in F_D}  \exp \left(\sum_{j \in A'} \xij^T \beta \right)}\cdot \left( \sum_{j \in A} \xij \right)\left( \sum_{j \in A} \xij \right)^\tran-\\ &
\left( \sum_{A \in F_{D_i}} \frac{ \exp \left(\sum_{j \in A} \xij^T \beta \right)}{\sum_{A' \in F_D}  \exp \left(\sum_{j \in A'} \xij^T \beta \right)} \cdot \sum_{j \in A} \xij  \right) \left( \sum_{A \in F_{D_i}} \frac{ \exp \left(\sum_{j \in A} \xij^T \beta \right)}{\sum_{A' \in F_D}  \exp \left(\sum_{j \in A'} \xij^T \beta \right)} \cdot \sum_{j \in A} \xij  \right)^{\tran} - \\ & \sum_{j \in S_i}\frac{\exp (\xij^T \beta)}{(1 + \exp(\xij^T \beta))^2} \xij \xij^{\tran} \,.
\end{align*}

\section{Alternative Forms of Gradient and Hessian}\label{app:gradComp}

For some choice of $i$, we consider the quantity
\[ \E \left( \sum_{j \in S_i} \xij Y_{ij} \mid \sum_{j \in S_i} \yij = D_i \right) \,. \] 
This is easily interpreted in the sampling setting: the expected sample sum of the covariate vectors, conditional on the total number of units sampled among those in $S_i$. Careful expansion yields 

\begin{align*}
\E \left( \sum_{j \in S_i} \xij Y_{ij} \mid \sum_{j \in S_i} \yij = D_i \right) &= \sum_{A \in F_{D_i}} P(\text{sample $A$ is drawn} \mid \text{sample of size $D$ is drawn}) \cdot \sum_{j \in A} \xij \\
&= \sum_{A \in F_{D_i}}  \frac{P(\text{sample $A$ is drawn})}{P(\text{sample of size $D$ is drawn})} \cdot \sum_{j \in A} \xij \\
&=  \sum_{A \in F_{D_i}} \frac{\prod_{i \in A} \pij \prod_{i \in A^c} (1 - \pij) }{\sum_{A' \in F_D} \prod_{i \in A'} \pij \prod_{i \in A'^c}(1- \pij)} \cdot \sum_{j \in A} \xij \\
&= \sum_{A \in F_D} \frac{ \exp \left(\sum_{j \in A} \xij^T \beta \right)}{\sum_{A' \in F_D}  \exp \left(\sum_{j \in A'} \xij^T \beta \right)}\cdot \sum_{j \in A} \xij 
\end{align*}
which we recognize as the first term in our gradient from the prior section. Analogous computations apply for the Hessian, giving us the following equivalent forms: 
\begin{align*}
\nabla_{\beta} \ell(\beta) &= \sum_i \E \left( \sum_{j \in S_i} \xij Y_{ij} \mid \sum_{j \in S_i} \yij = D_i \right) - \E \left( \sum_{j \in S_i} \xij Y_{ij}  \right)\\
\nabla_{\beta}^2 \ell(\beta) &=  \sum_i \cov \left( \sum_{j \in S_i} \xij Y_{ij} \mid \sum_{j \in S_i} \yij = D_i \right) - \cov \left( \sum_{j \in S_i} \xij Y_{ij}  \right)\\
\end{align*}

This form is particularly auspicious for computation of the exact gradient, as the first term can be expressed 
\[  \E \left( \sum_{j \in S_i} \xij Y_{ij} \mid \sum_{j \in S_i} \yij = D_i \right) = \sum_{j \in S_i} \frac{P\left(\sum_{k \neq j} \yij = D_i - 1\right)\pij}{P \left( \sum_{k \in S_i} \yij = D_i \right)}  \xij \] 
which we recognize as requiring the evaluation of only $|S_i| + 1$ Poisson binomial probabilities to compute. This is preferable to computations based on the original form of the gradient, which involved a combinatorial sum. 




\section{Alternative Model: Neural Network}\label{sec:neural}
In the neural network model, the Democratic support probability is modeled as 
\begin{align*}
\boldsymbol{h}_{ij} &= \sigma\left(W_1 \xij  + b_1 \right) \\
\pij &= \sigma\left(W_2 \boldsymbol{h}_{ij} + b_2 \right) \, ,
\end{align*}
where $W_1, W_2$ are weight matrices and $b_1, b_2$ bias vectors. This network has a single hidden layer. The dimension of $\boldsymbol{h}_{ij} $ can be selected by the modeler, though we used ten hidden nodes for our experiments. 

In the case of a general neural network model, it is typically believed that the loss is non-convex but that local optima are close to the global minimum in function value \citep{choromanska2015loss}. We assume that local optima also exist for the neural network model in our setting. 

With the Gaussian approximation to the Poisson binomial likelihood, the neural network gradients are 
\begin{align*}
\frac{\partial \ell_i}{\partial \pij} &=  \frac{-1 + 2\pij}{2 \phi_k^2} + \frac{1 - 2\pij}{2 \phi_k^4} (D_k - \mu_k)^2  +  \frac{D_k - \mu_k}{\phi_k^2}\\
\frac{\partial \ell_i}{\partial b_2} &= \sum_{j\in S_i} \frac{\partial \ell_k}{\partial \pij} \pij (1 - \pij) \\
\frac{\partial \ell_i}{\partial W_2} &= \left\{ \boldsymbol{h}_{ij} \right\}_j^T \left\{ \frac{\partial \ell}{\partial \pij} \pij (1 - \pij)\right\}_{j} \\
\frac{\partial \ell_i}{\partial b_1} &= \sum_{j \in S_i} \left\{ \frac{\partial \ell}{\partial \pij} \pij (1 - \pij)\right\}_{j} W_2^T \circ  \left\{ \boldsymbol{h}_{ij}(1 - \boldsymbol{h}_{ij})\right\}_j \\
\frac{\partial\ell_i}{\partial W_1} &= \left\{ \xij \right\}_j^T \left\{ \frac{\partial \ell}{\partial \pij} \cdot \pij (1 - \pij)\right\}_{j} W_2^T \circ  \left\{ \boldsymbol{h}_{ij}(1 - \boldsymbol{h}_{ij})\right\}_j \,,
\end{align*}
where $\{\xij\}_j$ denotes a column vector consisting of the entries $\xij$ for all $j \in S_i$ and $\circ$ denotes a Hadamard product.

{
\bibliographystyle{chicago}
\bibliography{csbib}
}

\end{document}